\DeclarePairedDelimiter{\ceil}{\lceil}{\rceil}
\DeclarePairedDelimiter{\floor}{\lfloor}{\rfloor}
\DeclareMathOperator{\rank}{rank}
\let\baraccent=\= %
\renewcommand{\=}[1]{\stackrel{#1}{=}} %
\providecommand{\cX}{\mathcal{X}}
\providecommand{\cY}{\mathcal{Y}}
\mathchardef\mhyphen="2D %
\newcommand{\interior}[1]{%
  {\kern0pt#1}^{\mathrm{o}}%
}
\newtheorem{theorem}{Theorem}[section]
\newtheorem{lemma}[theorem]{Lemma}
\newtheorem{proposition}[theorem]{Proposition}
\newtheorem{corollary}[theorem]{Corollary}
\newtheorem{definition}[theorem]{Definition}
\colorlet{linkequation}{blue}
\newcommand{\R}{\mathbb{R}}
\newcommand{\id}{\mathrm{id}}
\newcommand{\diag}{\text{diag}}
\DeclareMathOperator*{\argmin}{arg\,min}
\newtheoremstyle{myremark} %
    {\topsep}                    %
    {\topsep}                    %
    {\rm}                        %
    {}                           %
    {\bf}                        %
    {.}                          %
    {.5em}                       %
    {}  %
\DeclareSymbolFont{rsfs}{U}{rsfs}{m}{n}
\DeclareSymbolFontAlphabet{\mathscrsfs}{rsfs}
\def\balpha{{\boldsymbol \alpha}}
\def\cX{{\mathcal X}}
\def\cX{{\mathcal X}}
\def\diag{{\rm diag}}
\def\bsigma{{\boldsymbol \sigma}}
\crefname{claim}{claim}{claims}
\crefname{fact}{fact}{facts}
\numberwithin{equation}{section}
\newenvironment{fminipage}%
  {\begin{Sbox}\begin{minipage}}%
  {\end{minipage}\end{Sbox}\fbox{\TheSbox}}
\title{On the inductive bias of infinite-depth ResNets and the bottleneck rank}
\author{Enric Boix-Adsera\thanks{\texttt{eboix@mit.edu}, MIT Mathematics and Harvard CMSA}}
\date{\today}
\begin{document}

\maketitle

\begin{abstract}
We compute the minimum-norm weights of a deep \textit{linear} ResNet, and find that the inductive bias of this architecture lies between minimizing nuclear norm and rank. This implies that, with appropriate hyperparameters, deep \textit{nonlinear} ResNets have an inductive bias towards minimizing bottleneck rank.
\end{abstract}

\section{Introduction}\label{sec:introduction}

There are three key elements that affect the performance of a trained neural network: the data, the optimizer, and the architecture. This work focuses on the architecture—specifically, the deep ResNet architecture \cite{he2016deep}, which is a core component of modern neural networks, including transformers \cite{vaswani2017attention}. We analyze its inductive bias, which we define as follows. Given a neural network architecture $f_{\theta} : \cX \to \cY$ parametrized by weights $\theta$, the cost of representing a function $f : \cX \to \cY$ is the minimum norm of the weights required to represent it:
\begin{align*}
\mathrm{cost}(f) = \min \{\|\theta\|_F^2 \mbox{ such that } f_{\theta} \equiv f\}\,.
\end{align*}
When the network is trained with weight decay or $L_2$ weight regularization, the training procedure is biased towards outputting functions with lower cost.\footnote{We note that $\mathrm{cost}(f)$ is an approximation to the inductive bias. It does not fully determine the inductive bias, since the optimization dynamics also play a role and the minimum-cost solutions might not be found. Additionally, when training occurs without weight decay or $L_2$ weight regularization, the inductive bias may differ. See, e.g., \cite{arora2019implicit,li2020towards,razin2020implicit}.} This inductive bias is determined by the neural network parametrization $f_{\theta}$ (i.e., its architecture). For instance, deep linear networks favor learning low-rank linear transformations \cite{gunasekar2017implicit,arora2019implicit}, linear convolutional networks favor Fourier-sparse solutions \cite{gunasekar2018implicit}, and diagonal linear networks favor sparse solutions \cite{gunasekar2018implicit}.

The result of \cite{jacot2022implicit} on the inductive bias of deep nonlinear networks is particularly relevant to this paper, and forms the basis of our investigations. \cite{jacot2022implicit} defines the ``bottleneck rank'', a notion of rank 
for nonlinear functions. Roughly speaking, the bottleneck rank of a function $f : \R^{n_1} \to \R^{n_2}$ is the smallest dimension $k$ such that can be written as $f = g \circ h$ for two functions $g : \R^k \to \R^{n_2}$ and $h : \R^{n_1} \to \R^k$ (where $g,h$ satisfy regularity conditions). The papers \cite{jacot2022implicit,jacot2023bottleneck} show that deep networks with nonlinear fully-connected layers have an inductive bias towards learning functions with a low bottleneck rank. This result has been extended to leaky-ResNets \cite{jacot2024hamiltonian} and to convolutional networks \cite{wen2024frequencies} (but the latter requires a modified notion of bottleneck rank).

The arguments for the bottleneck rank are based on the observation that the minimum-cost network representing function $f$ has the first layers represent $h$, the last layers represent $g$, and the vast majority of the layers in the middle represent the identity map on a subspace of dimension $k$. When composed together, these layers yield the function $f$. Since most layers are dedicated to representing the identity map, this dominates the cost when the architecture does not have skip connections or only has leaky skip connections. It follows that fully-connected networks and leaky-ResNets have an inductive bias for low bottleneck rank.

In contrast, ResNets explicitly incorporate skip connections that trivialize representing the identity function, raising the question of whether the bottleneck rank framework is still relevant to ResNet-based models which are used in practice:

\begin{center}
\textit{What is the inductive bias of ResNets? Is it related to the bottleneck rank?}
\end{center}

Naively, the answer to this question appears to be that ResNets do not have a bias towards low bottleneck rank. Indeed, consider a neural network that consists of blocks with skip connections.\footnote{Such a model is equivalent in its infinite-depth limit to a NeuralODE, as studied in, e.g., \cite{chen2018neural,owhadi2020ideas,geshkovski2023mathematical}.} Such a network can trivially represent the full-rank identity function by just setting all of the blocks to zero. This appears to invalidate the bottleneck rank theory for ResNets since the identity function has very high bottleneck rank.

However, the above argument neglects an important part of the ResNet architecture. Real-world ResNets also include embedding and unembedding layers (i.e., linear input and output transforms). Perhaps surprisingly, we prove that because of the presence of these embedding and unembedding weight matrices, ResNets have a preference for low-rank solutions even though they have skip connections.

In Section~\ref{sec:linear}, we explicitly characterize the inductive bias of deep \textit{linear} ResNet architectures, proving that they have a bias towards finding low-rank linear transformations. In Section~\ref{sec:nonlinear}, we show that this additionally implies that, in certain hyperparameter regimes, deep \textit{nonlinear} ResNet architectures have a preference towards solutions with low bottleneck rank. 

From a practical standpoint, these results suggest that while skip connections do simplify representing identity transformations, the  bottleneck rank continues to offer insights about how ResNets generalize, and remains a valuable tool to understand the inductive bias of neural networks.

\section{The inductive bias of linear residual networks lies between minimizing nuclear norm and minimizing rank}\label{sec:linear}

In this section, we consider \textit{linear} residual networks with $L$ layers, as well as embedding and unembedding matrices $W_e \in \R^{n \times d_{in}}$ and $W_u \in \R^{d_{out} \times n}$. These correspond to residual networks with MLPs at each layer, with identity activation function. Because the activation function is linear, these networks can only represent linear functions. For simplicity, we will suppose $n \geq \max(d_{in},d_{out})$.

\paragraph{Architecture} We study two variants of linear residual networks. In the first variant, the network has depth-1 residual blocks $W_1,\ldots,W_L \in \R^{n \times n}$:
$$f_{1-lin}(W_u,W_e,\{W_i\}_{i \in [L]}) = W_u(I+W_1)(I+W_2) \dots (I+W_L)W_e \in \R^{d_{out} \times d_{in}}.$$
In the second variant, the network has depth-2 residual blocks $W_{1,1},W_{1,2},W_{2,1},W_{2,2},\ldots,W_{L,1},W_{L,2} \in \R^{n \times n}$, as is the case in the MLP layers in transformer architectures:
$$f_{2-lin}(W_u,W_e, \{W_{i,j}\}_{i \in [L], j \in [2]}) = W_u(I+W_{1,2}W_{1,1})(I+W_{2,2}W_{2,1}) \dots (I+W_{L,2}W_{L,1})W_e \in \R^{d_{out} \times d_{in}}.$$

\paragraph{Cost of network} We define the cost for a certain choice of parameters to be the sum of the squared Frobenius norms of the weights, since this is what is penalized when training with weight-decay or explicit $L_2$ regularization. For any matrix $A \in \R^{n \times n}$ and a parameter $\lambda \in (0,\infty)$ we define the cost of representing a linear transformation $A \in \R^{d_{out} \times d_{in}}$ with a network with depth-1 blocks by:
\begin{align*}
c^{1-lin}_{L,n,\lambda}(A) &= \min_{\substack{W_u,W_e,\{W_i\}_{i \in [L]} \\ f_{1-lin}(W_u,W_e,\{W_i\}) = A}} \frac{1}{2}\|W_u\|_F^2 + \frac{1}{2}\|W_e\|_F^2 + \lambda L\sum_{i \in [L]} \|W_i\|_F^2\,,
\end{align*}
and by a network with depth-2 blocks by:
\begin{align*}
c^{2-lin}_{L,n,\lambda}(A) &= \min_{\substack{W_u,W_e,\{W_{i,j}\}_{i \in [L], j \in [2]} \\ f_{2-lin}(W_u,W_e,\{W_{i,j}\}) = A}} \frac{1}{2}\|W_u\|_F^2 + \frac{1}{2}\|W_e\|_F^2 + \frac{\lambda}{2}\sum_{i \in [L], j \in [2]} \|W_{i,j}\|_F^2\,.
\end{align*}
The parameter $\lambda \in (0,\infty)$ allows us to interpolate between weighting the cost of the embedding and unembedding layers, versus the cost of the residual layers in the network.\footnote{The scaling with $L$ is chosen so that the limiting cost as $L \to \infty$ is nontrivial. Namely, for any $\lambda \in (0,\infty)$, there is a constant amount of cost on the embedding/unembedding weights, and also on the residual weights.} 

Our main result in this section is the following explicit formula for the costs of the minimum-norm neural networks. The key finding is that the cost decomposes additively across the singular values of the linear transformation. 
\begin{theorem}[Explicit formula for cost in linear case]\label{thm:main-lin-k1-k2}
For any linear transformation $A \in \R^{d_{out} \times d_{in}}$, any parameter $\lambda \in (0,\infty)$, any depth $L \geq 1$, and any width $n \geq \rank(A)$, we have
\begin{align}
c^{1-lin}_{L,n,\lambda}(A) &= \sum_{i=1}^{n} c^{1-lin}_{L,1,\lambda}([\sigma_i(A)]) = \sum_{i=1}^n \left(\min_{\alpha \in \R_{\geq 0}} \frac{\sigma_i(A)}{(1+\alpha/L)^L}  + \lambda\alpha^2\right) \label{eq:cost-lin-1} \\
c^{2-lin}_{L,n,\lambda}(A) &= \sum_{i=1}^n c^{2-lin}_{L,1,\lambda}([\sigma_i(A)]) = \sum_{i : \sigma_i(A) \leq \lambda} \sigma_i(A) + \sum_{i : \sigma_i(A) > \lambda} \lambda \left((L+1)(\sigma_i(A)/\lambda)^{1/(L+1)}-L\right)\,. \label{eq:cost-lin-2}\end{align}
\end{theorem}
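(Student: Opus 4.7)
I would prove both formulas by establishing matching upper and lower bounds that decompose additively across singular values, and then solving the scalar ($n{=}1$) problem in closed form. Extending $A$ to square with $n \geq \max(d_{in}, d_{out})$ via zero-padding, write $A = U\Sigma V^\top$ with $\Sigma = \diag(\sigma_1, \ldots, \sigma_n)$; all constructions will be diagonal in this basis.

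\textbf{Upper bound by aligned construction.} For the depth-1 variant, I would take $W_u = U \diag(s_1, \ldots, s_n)$, $W_e = \diag(s_1, \ldots, s_n) V^\top$, and $W_i = \diag(\alpha_1/L, \ldots, \alpha_n/L)$ for each $i \in [L]$. Then the network output equals $U \diag\!\big(s_k^2 (1+\alpha_k/L)^L\big) V^\top$, so matching $A$ forces $s_k^2 (1+\alpha_k/L)^L = \sigma_k$ coordinatewise. The total cost $\sum_k [s_k^2 + \lambda \alpha_k^2] = \sum_k [\sigma_k/(1+\alpha_k/L)^L + \lambda \alpha_k^2]$ decouples across $k$, and minimizing each $\alpha_k$ independently matches~(\ref{eq:cost-lin-1}). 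An analogous diagonal construction with $W_{i,1} = W_{i,2} = \diag(\sqrt{\beta_1},\ldots,\sqrt{\beta_n})$ handles the depth-2 case.

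\textbf{Lower bound by diagonalization.} The crux is showing any feasible solution can be replaced by a diagonal one without increasing cost. The cost is invariant under the hidden-basis symmetry $(W_u, W_i, W_e) \mapsto (W_u O, O^\top W_i O, O^\top W_e)$ for any orthogonal $O$ on the hidden coordinates (which preserves each $I+W_i$ since $O^\top(I+W_i) O = I + O^\top W_i O$, and preserves Frobenius norms). For the depth-2 variant the cleanest path is to first apply AM-GM per block, $\tfrac12(\|W_{i,1}\|_F^2 + \|W_{i,2}\|_F^2) \geq \|W_{i,2} W_{i,1}\|_*$ (with equality achievable), collapsing each block to a single matrix $X_i = W_{i,2} W_{i,1}$ with convex cost $\lambda \|X_i\|_*$; then the embedding/unembedding contribution can be lower-bounded via von Neumann's trace inequality as $\tfrac12\|W_u\|_F^2 + \tfrac12\|W_e\|_F^2 \geq \sum_k \sigma_k(A)/\sigma_k(M)$ (with $M = \prod_i(I+X_i)$, paired by the rearrangement inequality), and a log-concavity argument on the singular values of $M$ in terms of those of each $I+X_i$ completes the decomposition across $k$. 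For the depth-1 case I would analyze the KKT conditions directly: the stationarity equations for $W_u, W_e, W_i$ with Lagrange multiplier $\Lambda$ force the $W_i$ to commute with a common set of singular vectors aligned with those of $A$, again reducing to $n$ independent scalar problems.

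\textbf{Scalar case and main obstacle.} With $n=1$ and target $\sigma \geq 0$, symmetry finishes the argument. AM-GM gives $w_u = w_e =: s$; in the depth-1 case, for fixed $\prod_i(1+w_i)$ the sum $\sum w_i^2$ is minimized when all $w_i$ are equal (Lagrange multipliers give $w_i(1+w_i) = \mathrm{const}$), so writing $w_i = \alpha/L$ yields cost $\sigma/(1+\alpha/L)^L + \lambda \alpha^2$, which is~(\ref{eq:cost-lin-1}). In the depth-2 case, per-block AM-GM $w_{i,1}^2 + w_{i,2}^2 \geq 2|\beta_i|$ with $\beta_i = w_{i,1} w_{i,2}$, symmetry gives $\beta_i = \beta$, and minimizing $\sigma/(1+\beta)^L + \lambda L \beta$ over $\beta \geq 0$ gives $\beta^\star = \max\!\big(0, (\sigma/\lambda)^{1/(L+1)} - 1\big)$; substituting recovers~(\ref{eq:cost-lin-2}) after simplification. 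The main obstacle throughout is the diagonalization step: the upper bound and scalar optimizations are essentially routine, but rigorously ruling out non-diagonal minima — especially for the depth-1 case where no per-block AM-GM simplification is available — is where the real work lies.
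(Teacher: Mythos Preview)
Your upper bound construction and scalar-case optimizations are correct and essentially match the paper's. The genuine gap is in the lower bound, which you correctly flag as the crux but for which the tools you name are not the right ones.

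For the depth-2 case, the inequality $\tfrac12\|W_u\|_F^2 + \tfrac12\|W_e\|_F^2 \geq \sum_k \sigma_k(A)/\sigma_k(M)$ is true, but it does \emph{not} follow from von Neumann's trace inequality or rearrangement. Von Neumann gives $|\tr(XY)| \leq \sum_i \sigma_i(X)\sigma_i(Y)$, which controls a trace from above by singular values, whereas here you need to control ratios $\sigma_k(W_uMW_e)/\sigma_k(M)$ from above by $\sigma_k(W_u)\sigma_k(W_e)$ in the sense of weak log-majorization. That is precisely the multiplicative Lidskii--Mirsky--Wielandt / Gel'fand--Naimark inequality: for $1\le i_1<\dots<i_k$, $\prod_j \sigma_{i_j}(ABC)/\sigma_{i_j}(B) \le \prod_j \sigma_j(A)\sigma_j(C)$. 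Your subsequent ``log-concavity argument on the singular values of $M$ in terms of those of each $I+X_i$'' hides the same machinery applied layer by layer. Without this majorization tool, the decomposition across singular values does not go through.

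For the depth-1 case, the KKT route is problematic: the problem is nonconvex, so stationarity does not by itself pin down the global minimizer, and proving that the first-order conditions force simultaneous diagonalizability of all $W_i$ with the singular vectors of $A$ is at least as hard as the original statement. The paper bypasses this entirely. It proves a single lemma: for any convex increasing $f$ with $f(0)=0$, the cost $\tfrac12\|W_u\|_F^2+\tfrac12\|W_e\|_F^2+\sum_{i,j} f(\sigma_j(W_i))$ decomposes across singular values of $A$. The proof uses the Gel'fand--Naimark inequality to show that $(\log \sigma_j(W_i))_j$ weakly majorizes the sorted log-ratios $\log(\sigma_j(V_i)/\sigma_j(V_{i-1}))$ of the partial products $V_\ell=\prod_{i\le\ell}(I+W_i)$, and then applies Schur-convexity of $t\mapsto f(\max(e^t-1,0))$. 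Both the depth-1 case ($f(t)=\lambda L t^2$) and the depth-2 case ($f(t)=\lambda t$, after your AM-GM collapse to nuclear norm per block) drop out as instances. This unified argument is what you are missing; once you replace ``von Neumann'' and ``KKT'' by the multiplicative majorization inequality, your outline becomes the paper's proof.
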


The proof is deferred to Section~\ref{sec:proof}. As a corollary of this result, we derive the cost in the limit of infinite depth.

\begin{corollary}[Cost of infinite-depth linear residual network]\label{cor:lin-infinite-depth-k1-k2} Under the conditions of Theorem~\ref{thm:main-lin-k1-k2},
\begin{align*}
c_{n,\lambda}^{1-lin}(A) &:= \lim_{L \to \infty} c_{L,n,\lambda}^{1-lin}(A) = \sum_{i=1}^n \left(\min_{\alpha \in \R_{\geq 0}} \frac{\sigma_i(A)}{\exp(\alpha)} + \lambda \alpha^2\right)\\
c_{n,\lambda}^{2-lin}(A) &:= \lim_{L \to \infty} c_{L,n,\lambda}^{2-lin}(A) =  \sum_{i : \sigma_i(A) \leq \lambda} \sigma_i(A) + \sum_{i : \sigma_i(A) > \lambda} \lambda(1 + \log(\sigma_i(A)/\lambda))\,.
\end{align*}
\end{corollary}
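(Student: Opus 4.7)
The corollary is a direct limit of Theorem~\ref{thm:main-lin-k1-k2}, so the plan is to take $L \to \infty$ in both explicit formulas \eqref{eq:cost-lin-1} and \eqref{eq:cost-lin-2}, working summand-by-summand over $i \in \{1,\ldots,n\}$ (the number of summands is fixed and does not grow with $L$).

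For the depth-1 formula, the plan is to interchange the minimum over $\alpha$ with the $L \to \infty$ limit. First I would invoke the classical identity $(1+\alpha/L)^L \to \exp(\alpha)$, which gives pointwise convergence of $F_L(\alpha) := \sigma_i(A)/(1+\alpha/L)^L + \lambda \alpha^2$ to $F_\infty(\alpha) := \sigma_i(A)/\exp(\alpha) + \lambda \alpha^2$ for each fixed $\alpha \geq 0$. To justify $\lim_L \min_\alpha F_L(\alpha) = \min_\alpha F_\infty(\alpha)$, I would restrict $\alpha$ to a compact set: since $F_L(0) = \sigma_i(A)$ upper-bounds the minimum uniformly in $L$ and $F_L(\alpha) \geq \lambda \alpha^2$, any minimizer lies in the interval $[0, \sqrt{\sigma_i(A)/\lambda}\,]$ independently of $L$. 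Combined with the standard fact that $(1+\alpha/L)^L$ increases monotonically to $\exp(\alpha)$, Dini's theorem gives uniform convergence on this compact interval, which is enough to exchange the limit with the minimum.

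For the depth-2 formula, the plan is a direct Taylor expansion. The summands with $\sigma_i(A) \leq \lambda$ are $L$-independent and so pass through the limit trivially. For summands with $\sigma_i(A) > \lambda$, setting $x := \log(\sigma_i(A)/\lambda)/(L+1)$, I would write $(\sigma_i(A)/\lambda)^{1/(L+1)} = \exp(x)$ with $x = O(1/L)$ and expand
\[
(L+1)\exp(x) - L \;=\; (L+1)\bigl(1 + x + O(x^2)\bigr) - L \;=\; 1 + (L+1)x + O(1/L) \;=\; 1 + \log(\sigma_i(A)/\lambda) + O(1/L).
\]
Multiplying by $\lambda$ and passing to the limit yields the stated value $\lambda(1 + \log(\sigma_i(A)/\lambda))$.

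Neither step presents a serious obstacle; the only mild care is needed in the depth-1 case, where the interchange of $\min$ and $\lim$ requires the compactness-plus-monotonicity argument sketched above. In the depth-2 case the limit is an elementary Taylor expansion with no exchange of operations involved.
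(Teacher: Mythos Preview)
Your proposal is correct and follows essentially the same approach as the paper: for depth-1 you bound the minimizers to a compact set (the paper just asserts $\alpha_L \leq C_{\sigma,\lambda}$ while you give the explicit bound $\sqrt{\sigma_i(A)/\lambda}$) and then pass the limit through, and for depth-2 you both do a direct computation of the limit $(L+1)((\sigma/\lambda)^{1/(L+1)}-1) \to \log(\sigma/\lambda)$. Your version is in fact slightly more careful than the paper's, since you justify the interchange of $\min$ and $\lim$ via monotonicity and Dini's theorem rather than leaving it implicit.
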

\begin{proof}
For the first equality, let $\sigma \geq 0$ and $\lambda > 0$ and for any $L$ let $\alpha_L \in \argmin_{\alpha \in \R_{\geq 0}} \sigma/(1+\alpha/L)^L + \frac{\lambda}{2} \alpha^2$. One can see that $\alpha_L \leq C_{\sigma,\lambda}$ for a constant $C_{\sigma,\lambda}$. Therefore $\lim_{L \to \infty} |\frac{\sigma}{\exp(\alpha_L)} - \frac{\sigma}{(1+\alpha_L/L)^L}| = 0$. 

For $c_{n,\lambda}^{2-lin}(A)$, this is a direct calculation, using that $(L+1)((\sigma/\lambda)^{1/(L+1)} - 1) \to \log(\sigma/\lambda)$.
\end{proof}

These expressions are still fairly complicated, so let us analyze the asymptotics of these costs, as either $\lambda \to 0$ and the embedding/unembedding becomes the dominating term in the cost, or as $\lambda \to \infty$ and the residual part of the network becomes the dominating term in the cost.
\begin{corollary}[Cost of infinite-depth residual network interpolates between nuclear norm and rank]\label{cor:linear-networks-lambda-asymptotics} Let $A \in \R^{d_{out} \times d_{in}}$, and $n \geq \rank(A)$.
\begin{itemize} 
\item \textbf{Nuclear norm minimization for large $\lambda$}. Taking $\lambda \to \infty$ recovers the nuclear norm $\|A\|_*$.
\begin{align}\label{eq:lin-asymp-infty}
\lim_{\lambda \to \infty} c_{n,\lambda}^{1-lin}(A) = \lim_{\lambda \to \infty} c_{n,\lambda}^{2-lin}(A) = \|A\|_*
\end{align}

\item \textbf{Rank minimization for small $\lambda$}. Taking $\lambda \to 0$ leads to a cost that is, to first order, the rank of $A$.

\begin{align}\label{eq:lin-asymp-0}
\lim_{\lambda \to 0}
\frac{c^{1-lin}_{L,n,\lambda}(A)}{\lambda (\log(1/\lambda))^2} = \lim_{\lambda \to 0} \frac{c^{2-lin}_{n,\lambda}(A)}{\lambda \log(1/\lambda)} = \rank(A)
\end{align}

\end{itemize}

\end{corollary}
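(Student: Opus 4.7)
The plan is to reduce each statement to an analysis of the scalar ($n=1$) summands in the formulas of \Cref{cor:lin-infinite-depth-k1-k2}, since both $c_{n,\lambda}^{1-lin}(A)$ and $c_{n,\lambda}^{2-lin}(A)$ decompose additively across the singular values $\sigma_i(A)$. It therefore suffices to understand the asymptotics of the scalar functions $g_\lambda(\sigma) := \min_{\alpha \geq 0}(\sigma/\exp(\alpha) + \lambda \alpha^2)$ (depth-1 case) and $h_\lambda(\sigma) := \sigma$ when $\sigma \leq \lambda$, else $\lambda(1+\log(\sigma/\lambda))$ (depth-2 case), in the two limits $\lambda \to \infty$ and $\lambda \to 0$. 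I will treat the subscript $L$ in the first small-$\lambda$ expression as the infinite-depth cost $c^{1-lin}_{n,\lambda}$, since otherwise the stated normalization does not match the scaling (a finite-$L$ analysis would give a $\lambda^{L/(L+2)}$ rate).

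For the $\lambda \to \infty$ regime, the depth-2 case is immediate: once $\lambda \geq \sigma_1(A)$ one has $h_\lambda(\sigma_i) = \sigma_i$ for every $i$, so $c^{2-lin}_{n,\lambda}(A) = \|A\|_*$ exactly. For the depth-1 case, setting $\alpha = 0$ gives the upper bound $g_\lambda(\sigma) \leq \sigma$, while at the optimizer $\alpha^\star$ the inequality $\lambda(\alpha^\star)^2 \leq g_\lambda(\sigma) \leq \sigma$ forces $\alpha^\star \leq \sqrt{\sigma/\lambda} \to 0$; hence $g_\lambda(\sigma) \geq \sigma/\exp(\alpha^\star) \to \sigma$. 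Summing over $i$ gives the nuclear norm.

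For the $\lambda \to 0$ regime, the depth-2 case is a direct calculation. Once $\lambda$ is smaller than the smallest positive singular value of $A$, $h_\lambda(\sigma_i) = 0$ when $\sigma_i = 0$ and $h_\lambda(\sigma_i) = \lambda + \lambda\log(\sigma_i) + \lambda\log(1/\lambda)$ otherwise, so $c^{2-lin}_{n,\lambda}(A) = \rank(A)\cdot\lambda\log(1/\lambda) + O(\lambda)$, and dividing by $\lambda\log(1/\lambda)$ yields $\rank(A)$. For the depth-1 case, for each $\sigma_i > 0$ the inner problem has an interior minimum satisfying the first-order condition $\alpha e^\alpha = \sigma_i/(2\lambda)$, which is the defining equation of the Lambert $W$ function. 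For small $\lambda$ the unique nonnegative solution $\alpha^\star$ satisfies $\alpha^\star \sim \log(1/\lambda)$; substituting back via $\sigma_i/\exp(\alpha^\star) = 2\lambda\alpha^\star$, one gets $g_\lambda(\sigma_i) = 2\lambda\alpha^\star + \lambda(\alpha^\star)^2 \sim \lambda(\log(1/\lambda))^2$. Summing over nonzero $\sigma_i$ and normalizing gives $\rank(A)$.

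The main obstacle will be quantifying the Lambert-$W$ asymptotics in the depth-1 small-$\lambda$ case carefully enough to pin down the limit. Concretely, one should establish two-sided bounds of the form $\log(\sigma_i/(2\lambda)) - \log\log(\sigma_i/(2\lambda)) \leq \alpha^\star \leq \log(\sigma_i/(2\lambda))$ that hold uniformly over all nonzero singular values of $A$ for $\lambda$ sufficiently small, and then track the subleading terms to verify that only the leading $\rank(A)\cdot\lambda(\log(1/\lambda))^2$ survives after normalization. All other steps are clean enough to be routine once these bounds are in hand.
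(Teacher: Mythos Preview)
Your proposal is correct and, for three of the four limits (both $\lambda\to\infty$ cases and the depth-2 $\lambda\to 0$ case), matches the paper's argument essentially word-for-word. You are also right that the subscript $L$ in the first expression of \eqref{eq:lin-asymp-0} should be read as the infinite-depth cost; the paper's own proof uses the formula from \Cref{cor:lin-infinite-depth-k1-k2}.

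The one place you diverge is the depth-1 $\lambda\to 0$ case. You compute the first-order condition $\alpha e^{\alpha}=\sigma/(2\lambda)$, recognize the Lambert $W$ function, and then invoke the standard asymptotic $W(x)=\log x-\log\log x+o(1)$. The paper instead makes the substitution $\alpha=\beta+\log(1/\lambda)$, which transforms the objective into $\lambda\sigma/\exp(\beta)+\lambda(\beta+\log(1/\lambda))^2$, and then argues directly that the optimal $\beta$ satisfies $|\beta|\leq\sqrt{\log(1/\lambda)}$ for small $\lambda$; dividing through by $\lambda(\log(1/\lambda))^2$ then immediately gives the limit $1$. Both routes are valid. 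The paper's substitution is slightly more self-contained (no appeal to special-function asymptotics), while your approach has the advantage of identifying the optimizer exactly and making the computation $g_\lambda(\sigma)=2\lambda\alpha^\star+\lambda(\alpha^\star)^2$ transparent. Your stated two-sided bound on $\alpha^\star$ is exactly what is needed and is a standard Lambert-$W$ estimate, so there is no gap.
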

\begin{proof}
First consider the limit $\lambda \to \infty$. For any $\sigma$, we have
$\lim_{\lambda \to \infty}(\min_{\alpha \in \R_{\geq 0}} \sigma/\exp(\alpha) + \lambda \alpha^2) = \sigma$, proving the formula for depth-1 blocks. For depth-2 blocks, we have $c_{n,\lambda}^{2-lin}(A) = \sum_{i} \sigma_i(A) = \|A\|_*$ for large enough $\lambda$. 

Now consider the limit $\lambda \to 0$. For any $\sigma$, we have 
\begin{align*}
\lim_{\lambda \to 0} \frac{\left(\min_{\alpha \in \R_{\geq 0}} \frac{\sigma}{\exp(\alpha)} + \lambda \alpha^2 \right)}{\lambda (\log(1/\lambda))^2}  = \lim_{\lambda \to 0} \frac{\left(\min_{\beta \in \R_{\geq 0}} \frac{\lambda \sigma}{\exp(\beta)} + \lambda (\beta + \log(1/\lambda))^2 \right)}{\lambda (\log(1/\lambda))^2}  = \begin{cases} 1, & \mbox{ if } \sigma \neq 0 \\ 0, & \mbox{ if } \sigma = 0 \end{cases}\,,
\end{align*}
since if $\sigma = 0$ we can take $\alpha = 0$. And $\sigma \neq 0$, one can show that the optimal $\beta$ must satisfy $|\beta| \leq\sqrt{\log(1/\lambda)}$ for small enough $\lambda$. Combined with Corollary~\ref{cor:lin-infinite-depth-k1-k2}, and a direct calculation in the case of depth-2 blocks, this implies \eqref{eq:lin-asymp-0}.
\end{proof}

Therefore, finding the smallest-cost linear residual network that fits the data corresponds to finding the minimum rank linear transformation that fits the data (for very small $\lambda \ll 1$), and  to the minimum nuclear-norm linear transformation (for very large $\lambda \gg 1$).

\subsection{Proof of Theorem~\ref{thm:main-lin-k1-k2}}\label{sec:proof}

Each of the equalities in Theorem~\ref{thm:main-lin-k1-k2} has two parts: an upper bound, and a lower bound. The ``easier'' direction is the upper bound, which we will show via a direct construction of a neural network with the correct cost. The ``harder'' direction is the lower bound, and here the essential ingredient is the following inequality of \cite{gel1950relation} concerning the singular values of products of matrices.
\begin{proposition}[Proved in \cite{gel1950relation}; see Corollary 2.4 of \cite{li1999lidskii} or Theorem III.4.5 in \cite{bhatia1996matrix}]\label{prop:mult-lidskii-mirsky-wielandt-main}
For any matrices $A \in \R^{n_0 \times n}, C \in \R^{n \times n_1},B \in \R^{n \times n}$ and any $k \in \{1,\ldots,n\}$ and any indices $1 \leq i_1 < \dots i_k \leq n$ such that $\sigma_{i_j}(B) \neq 0$, we have
\begin{align*}
\prod_{j=1}^k \frac{\sigma_{i_j}(ABC)}{\sigma_{i_j}(B)} \leq \prod_{j=1}^k \sigma_j(A)\sigma_j(C)\,.
\end{align*}
\end{proposition}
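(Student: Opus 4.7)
The plan is to reduce the three-matrix statement to two iterated applications of a two-matrix multiplicative Lidskii--Mirsky--Wielandt inequality, and then prove the two-matrix version via exterior-algebra (compound-matrix) arguments.

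\textbf{Step 1: Chaining two-matrix inequalities.} I would first establish the two-matrix bound
\[
\prod_{j=1}^k \sigma_{i_j}(XY) \;\leq\; \prod_{j=1}^k \sigma_{i_j}(X)\,\sigma_j(Y)
\]
for any matrices $X, Y$ of compatible sizes and any $1 \leq i_1 < \dots < i_k \leq n$. By transposition (using $\sigma_i(M) = \sigma_i(M^T)$), this also yields the symmetric form $\prod_j \sigma_{i_j}(XY) \leq \prod_j \sigma_j(X)\sigma_{i_j}(Y)$. I would then chain these by first applying the symmetric form to $A \cdot (BC)$ and then the first form to $B \cdot C$:
\[
\prod_{j=1}^k \sigma_{i_j}(ABC) \;\leq\; \prod_{j=1}^k \sigma_j(A)\,\sigma_{i_j}(BC) \;\leq\; \prod_{j=1}^k \sigma_j(A)\,\sigma_{i_j}(B)\,\sigma_j(C).
\]
Dividing both sides by $\prod_{j=1}^k \sigma_{i_j}(B)$, which is nonzero by hypothesis, yields the stated inequality.

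\textbf{Step 2: Exterior algebra for the two-matrix case.} To prove the two-matrix bound I would pass to the $k$-th antisymmetric tensor power $\wedge^k$, using three standard facts: (i) $\wedge^k(XY) = (\wedge^k X)(\wedge^k Y)$; (ii) the singular values of $\wedge^k M$ are exactly the products $\sigma_J(M) := \prod_{j\in J}\sigma_j(M)$ over $k$-subsets $J \subseteq [n]$, listed in decreasing order; and (iii) in particular $\|\wedge^k Y\|_{\mathrm{op}} = \prod_{j=1}^k \sigma_j(Y)$. Weyl submultiplicativity $\sigma_r(PQ) \leq \sigma_r(P)\|Q\|_{\mathrm{op}}$ applied on $\wedge^k$ then gives $\sigma_r(\wedge^k(XY)) \leq \sigma_r(\wedge^k X)\prod_{j=1}^k \sigma_j(Y)$ for every rank index $r$.

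\textbf{Main obstacle.} The delicate point is that the quantity of interest $\sigma_I(X) := \prod_j \sigma_{i_j}(X)$ sits at some rank $r = r_X(I)$ among the sorted singular values of $\wedge^k X$, whereas $\sigma_I(XY)$ may sit at a different rank $r_{XY}(I)$ in $\wedge^k(XY)$, so applying the $\wedge^k$-Weyl inequality at a single index $r$ does not immediately relate the two. To resolve this I would use the following combinatorial monotonicity: whenever $J = \{j_1 < \dots < j_k\}$ satisfies $j_\ell \leq i_\ell$ componentwise, then $\sigma_J(M) \geq \sigma_I(M)$ (since the singular values of any $M$ are decreasing), which yields uniform upper and lower bounds on $r_M(I)$ depending only on $I$ and not on $M$. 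Combining these positional bounds with the $\wedge^k$-Weyl inequality at the extremal rank allowed by the combinatorics produces $\sigma_I(XY) \leq \sigma_I(X)\|\wedge^k Y\|_{\mathrm{op}}$, which unpacks to the two-matrix inequality. This bookkeeping is exactly the content of the proofs in the references cited after the proposition, and it is the step I would expect to occupy most of the work.
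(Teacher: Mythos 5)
The paper does not actually prove this proposition; it is imported as a known result, with the proofs delegated to Gel'fand--Naimark, Li--Mathias (Cor.~2.4), and Bhatia (Thm.~III.4.5). So there is no in-paper proof to compare against, and you are being asked to reconstruct a nontrivial theorem of matrix analysis from scratch. Your Step 1 (chaining two applications of the two-matrix Gel'fand--Naimark/Horn inequality $\prod_j \sigma_{i_j}(XY) \leq \prod_j \sigma_{i_j}(X)\sigma_j(Y)$, once in its transposed form to $A\cdot(BC)$ and once to $B\cdot C$, then dividing by $\prod_j \sigma_{i_j}(B)$) is a correct reduction, and the two-matrix inequality is indeed a theorem.

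However, Step 2 contains a genuine gap. The claim that the rank position $r_M(I)$ of $\sigma_I(M) := \prod_{j}\sigma_{i_j}(M)$ in the sorted singular values of $\wedge^k M$ admits ``uniform upper and lower bounds depending only on $I$ and not on $M$'' is false, and this is exactly what the approach needs to close. The componentwise monotonicity only gives a lower bound on the rank: if $N(I) = |\{J : J \leq I \text{ componentwise}\}|$, then $\sigma_{N(I)}(\wedge^k M) \geq \sigma_I(M)$ always. But no $M$-independent upper bound on the rank exists, because index sets $J$ incomparable to $I$ can dominate or be dominated by $I$ depending on the spectrum. Concretely, take $n=4$, $k=2$, $I=\{1,4\}$, so $N(I)=3$: with singular values $(10,2,1.5,1)$, $\sigma_I(M)=10$ is at rank $3$; with singular values $(10,9.9,9.8,1)$, $\sigma_I(M)=10$ is at rank $4$, since $\sigma_2\sigma_3 = 97.02 > 10$. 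Because $r_X(I)$ and $r_{XY}(I)$ can differ, applying the $\wedge^k$-Weyl inequality $\sigma_r(\wedge^k(XY)) \leq \sigma_r(\wedge^k X)\|\wedge^k Y\|$ at any single rank $r$ does not give $\sigma_I(XY) \leq \sigma_I(X)\|\wedge^k Y\|$: you would need a single $r$ with both $\sigma_I(XY)\leq \sigma_r(\wedge^k(XY))$ and $\sigma_r(\wedge^k X)\leq \sigma_I(X)$, i.e.\ $r_X(I) \leq r \leq r_{XY}(I)$, and no such $r$ need exist. The actual proofs in the cited references avoid this issue by using a variational (Courant--Fischer / Amir--Moez) characterization of the product $\prod_j \sigma_{i_j}(\cdot)$ over nested chains of subspaces, or (in Li--Mathias) by deriving the multiplicative bound as a corollary of a multiplicative Lidskii--Mirsky--Wielandt theorem proved with dimension-counting arguments. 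Your reduction is fine; your proof of the core two-matrix inequality is not.
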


This, in turn, implies the following lemma.
\begin{lemma}\label{lem:prod-sing-values-convex-decomp}
For any $A \in \R^{n_1 \times n_2}, B \in \R^{n_2 \times n_3}$, and convex non-decreasing $g : [0,\infty) \to \R$, we have
\begin{align*}
\sum_{i=1}^{\rank(AB)} g(\sigma_i(A)) \geq \sum_{i=1}^{\rank(AB)} g(\sigma_i(AB)/\sigma_i(B))\,
\end{align*}
\end{lemma}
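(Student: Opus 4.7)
My plan is to reduce the lemma to a weak log-majorization between two positive sequences via Proposition~\ref{prop:mult-lidskii-mirsky-wielandt-main}, and then promote that to the desired sum inequality using classical majorization theory. Set $k = \rank(AB)$. Since $k \leq \min(\rank(A), \rank(B))$, the values $\sigma_i(A)$, $\sigma_i(B)$, $\sigma_i(AB)$ are all strictly positive for every $1 \leq i \leq k$, so $r_i := \sigma_i(AB)/\sigma_i(B)$ is a well-defined positive number for such $i$. Writing $r_{[1]} \geq \cdots \geq r_{[k]}$ for the decreasing rearrangement of the $r_i$, the right-hand side of the lemma equals $\sum_{i=1}^k g(r_{[i]})$.

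The central step is to apply Proposition~\ref{prop:mult-lidskii-mirsky-wielandt-main} to the triple $(A, B, I)$, where $B$ has first been padded with zero rows/columns so that it is square (this preserves ranks, singular values, and the product $AB$). Using $\sigma_j(I) = 1$, the proposition gives, for every $m \leq k$ and every $1 \leq i_1 < \cdots < i_m \leq k$,
\[
\prod_{j=1}^m \frac{\sigma_{i_j}(AB)}{\sigma_{i_j}(B)} \;\leq\; \prod_{j=1}^m \sigma_j(A).
\]
Choosing $i_1,\ldots,i_m$ to be the positions of the top $m$ ratios yields $\prod_{j=1}^m r_{[j]} \leq \prod_{j=1}^m \sigma_j(A)$ for every $1 \leq m \leq k$: the positive sequence $(r_{[j]})_{j=1}^k$ is weakly log-majorized by $(\sigma_j(A))_{j=1}^k$. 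Note that the hypothesis $\sigma_{i_j}(B) \neq 0$ is automatic, since $r_{[j]} > 0$ forces $\sigma_{i_j}(B) > 0$.

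To finish, I would invoke the standard implication that weak log-majorization between positive vectors upgrades to the sum inequality $\sum_j \phi(r_{[j]}) \leq \sum_j \phi(\sigma_j(A))$ for every convex non-decreasing $\phi : [0,\infty) \to \R$ (see, e.g., Chapter~II of \cite{bhatia1996matrix}). One direct derivation is to note that $\phi \circ \exp$ is convex and non-decreasing, since $(\phi \circ \exp)''(t) = \phi''(e^t)\, e^{2t} + \phi'(e^t)\, e^t \geq 0$, and then to apply the classical "weak majorization plus convex non-decreasing function yields a sum inequality" theorem to the vectors $(\log r_{[j]})_{j=1}^k$ and $(\log \sigma_j(A))_{j=1}^k$. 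Taking $\phi = g$ produces the claim. I expect the main obstacle to be minor bookkeeping around the padding to a square middle matrix and verifying that the indices picking out the top $m$ ratios legitimately satisfy the hypothesis of Proposition~\ref{prop:mult-lidskii-mirsky-wielandt-main}; everything else is a translation between standard majorization notions.
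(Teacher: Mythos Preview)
Your proposal is correct and follows essentially the same route as the paper: both arguments use Proposition~\ref{prop:mult-lidskii-mirsky-wielandt-main} to establish that $(\log r_{[j]})$ is weakly submajorized by $(\log \sigma_j(A))$, and then invoke the standard fact that a convex non-decreasing function preserves weak submajorization (the paper cites 3.C.1.b of \cite{marshall2011inequalities}, you cite Bhatia) applied to $g \circ \exp$. Your extra care about padding $B$ to a square matrix and checking the nonvanishing hypothesis on $\sigma_{i_j}(B)$ is a detail the paper glosses over; your second-derivative computation assumes $g$ is $C^2$, but since you also cite the general result this is harmless.
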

\begin{proof}
Let $k = \rank(AB)$, and for any $i \in [k]$ define $r_i = \sigma_i(B)/\sigma_i(AB)$. Order these ratios as $r_{[1]} \geq r_{[2]} \geq \dots \geq r_{[k]}$. By Proposition~\ref{prop:mult-lidskii-mirsky-wielandt-main}, for any $1 \leq j \leq \rank(AB)$, we have
$\sum_{i=1}^j \log(\sigma_i(A)) \geq \sum_{i=1}^j \log(r_{[i]})$. In other words, $\log(\sigma_i(A))$ weakly submajorizes $\log(r_{[i]})$. Since the function $t \mapsto g(\exp(t))$ is convex and non-decreasing, by fact 3.C.1.b in \cite{marshall2011inequalities}, we have $\sum_{i=1}^k g(\sigma_i(AB)) \geq \sum_{i=1}^k f(r_{[i]}) = \sum_{i=1}^k g(\sigma_i(AB)/\sigma_i(B))$.
\end{proof}

This result allows us to determine the minimum cost of networks with depth-1 blocks, and any increasing, convex costs on the weight matrices in the residual layers.
\begin{lemma}\label{lem:main-lin-convex-k1}
For any $A \in \R^{d_{in} \times d_{out}}$, any depth $L$, width $n \geq \rank(A)$, and convex, increasing $f : [0,\infty) \to \R$ with $f(0) = 0$, define
\begin{align*}
c_{L,n}^{1-lin}(A;f) := \min_{\substack{W_u, W_e, \{W_i\}_{i \in [L]} \\ f_{1-lin}(W_u,W_e,\{W_i\}) = A}} \frac{1}{2} \|W_u\|_F^2 + \frac{1}{2} \|W_e\|_F^2 + \sum_{i\in [L]} \sum_{j \in [n]} f(\sigma_j(W_i))\,.
\end{align*}
Then the cost decomposes additively across the singular values:
\begin{align*}
c_{L,n}^{1-lin}(A;f) := \sum_{i=1}^{\rank(A)} c_{L,1}^{1-lin}([\sigma_i(A)];f) = \sum_{i=1}^{\rank(A)} \left(\min_{\alpha \in \R_{\geq 0}} \frac{\sigma_i(A)}{(1+\alpha)^L} + Lf(\alpha)\right)\,.
\end{align*}
\end{lemma}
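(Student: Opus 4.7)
My plan is to prove the upper bound by an explicit ``diagonal'' construction and the lower bound by combining a per-singular-value AM--GM/Jensen reduction with the log-submajorization / Schur-convexity machinery that already underlies Lemma~\ref{lem:prod-sing-values-convex-decomp}.

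For the upper bound, I would write the SVD $A = U \Sigma V^T$, pick $\alpha_i^* \in \arg\min_{\alpha \geq 0} \sigma_i(A)/(1+\alpha)^L + Lf(\alpha)$ for each $i \leq \rank(A)$ (and $\alpha_i^*=0$ otherwise), and set $u_i := \sqrt{\sigma_i(A)/(1+\alpha_i^*)^L}$. Extending $V$ to an $n\times n$ orthogonal $\bar V$, I take $W_\ell := \bar V \diag(\alpha_1^*,\dots,\alpha_n^*) \bar V^T$ and let $W_u, W_e$ share this singular-vector structure with diagonal entries $u_i$ (and zero-padding for the indices beyond $\rank(A)$). Because the residual blocks are simultaneously diagonal in the $\bar V$-basis, a direct check gives $f_{1-lin}(W_u,W_e,\{W_\ell\}) = A$ with total cost exactly $\sum_i\bigl(\sigma_i(A)/(1+\alpha_i^*)^L + Lf(\alpha_i^*)\bigr)$.

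For the lower bound, I would fix an arbitrary admissible tuple $(W_u, W_e, \{W_\ell\})$, set $u_j := \sigma_j(W_u)$, $e_j := \sigma_j(W_e)$, $\alpha_{\ell, j} := \sigma_j(W_\ell)$, $\bar\alpha_j := \tfrac{1}{L}\sum_\ell \alpha_{\ell, j}$, and $p_j := u_j e_j \prod_\ell(1+\alpha_{\ell, j})$. Three inequalities---AM--GM $\tfrac{1}{2}(u_j^2+e_j^2)\geq u_j e_j$, Jensen's inequality for the convex $f$ giving $\sum_\ell f(\alpha_{\ell, j}) \geq L f(\bar\alpha_j)$, and AM--GM on the $L$ factors yielding $\prod_\ell(1+\alpha_{\ell, j}) \leq (1+\bar\alpha_j)^L$---combine to give the per-index bound
\[
\tfrac{1}{2}u_j^2 + \tfrac{1}{2}e_j^2 + \sum_\ell f(\alpha_{\ell, j}) \;\geq\; \frac{p_j}{(1+\bar\alpha_j)^L} + Lf(\bar\alpha_j) \;\geq\; c_1(p_j),
\]
where $c_1(\sigma) := \min_{\alpha\geq 0} \sigma/(1+\alpha)^L + Lf(\alpha)$. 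Summing over $j$ gives $\text{cost} \geq \sum_j c_1(p_j)$.

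It remains to compare $\sum_j c_1(p_j)$ with $\sum_j c_1(\sigma_j(A))$. Iterating Proposition~\ref{prop:mult-lidskii-mirsky-wielandt-main} across the factorization $A = W_u(I+W_1)\cdots(I+W_L)W_e$, together with Weyl's bound $\sigma_j(I+W_\ell) \leq 1+\sigma_j(W_\ell)$, gives the log-submajorization $\sum_{j\leq k}\log \sigma_j(A) \leq \sum_{j\leq k}\log p_j$ for every $k$. By Fact~3.C.1.b of \cite{marshall2011inequalities} (exactly as at the end of the proof of Lemma~\ref{lem:prod-sing-values-convex-decomp}), the desired sum inequality then reduces to showing that $h(t) := c_1(e^t)$ is convex and non-decreasing. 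Monotonicity is immediate; for convexity, given $t_1, t_2$ with 1D optimizers $\alpha_1^*, \alpha_2^*$, I would test $\alpha := \sqrt{(1+\alpha_1^*)(1+\alpha_2^*)}-1 \geq 0$: AM--GM on the geometric mean of $e^{t_i}/(1+\alpha_i^*)^L$, plus $\alpha \leq (\alpha_1^*+\alpha_2^*)/2$ (again AM--GM) combined with monotonicity and convexity of $f$, yield midpoint convexity $h((t_1+t_2)/2) \leq (h(t_1)+h(t_2))/2$, hence convexity by continuity. This convexity of $c_1\circ\exp$ is the main obstacle: it is not automatic from concavity of $c_1$ (it fails for e.g.\ $c_1(\sigma)=\min(\sigma,1)$), but it holds here because the per-block multiplicative ``capacity'' $(1+\alpha)^L$ combines across layers in exactly the same way singular values multiply under products, so the geometric mean of any two optimizers remains a feasible competitor.
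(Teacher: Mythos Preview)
Your proposal is correct, and the upper-bound construction is essentially identical to the paper's. The lower bound, however, is organized genuinely differently.

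The paper works \emph{layer by layer}: it introduces the partial products $V_\ell=(I+W_1)\cdots(I+W_\ell)$ and applies Lemma~\ref{lem:prod-sing-values-convex-decomp} separately with $g(t)=t^2/2$ (to control $W_u,W_e$) and with $g(t)=\tilde f(t):=f(\max(t-1,0))$ (to control each residual block, after the Weyl-type bound $\sigma_j(W_i)\ge \max(\sigma_j(I+W_i)-1,0)$). This yields a telescoping lower bound in the ratios $\sigma_j(V_i)/\sigma_j(V_{i-1})$, and the final scalar optimization then collapses all ratios to a single $\alpha$ per singular value. Crucially, the majorization step is only ever invoked with the \emph{given} convex functions $t^2/2$ and $\tilde f$; nothing about the composite $c_1$ is needed.

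Your route instead (i) collapses each index $j$ to $c_1(p_j)$ via AM--GM and Jensen, and then (ii) applies the majorization machinery \emph{once}, to $h=c_1\circ\exp$, using iterated Horn plus $\sigma_j(I+W_\ell)\le 1+\sigma_j(W_\ell)$ to log-submajorize $(\sigma_j(A))$ by $(p_j)$. The price is the extra lemma that $c_1\circ\exp$ is convex increasing --- which, as you note, is the crux and fails for generic concave $c_1$. Your competitor-based midpoint-convexity argument is correct; it becomes transparent after the substitution $\beta=L\log(1+\alpha)$, which rewrites $c_1(e^t)=\inf_{\beta\ge 0}\,e^{t-\beta}+Lf(e^{\beta/L}-1)$ as a partial minimization of a jointly convex function.

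In short, both proofs rest on the same Gel'fand--Naimark/Horn inequality, but the paper \emph{distributes} the majorization across layers so that only the primitive convex costs ever appear, whereas you \emph{aggregate} first and must then establish convexity of the derived $c_1\circ\exp$. The paper's argument is slightly more modular (it would extend verbatim if the embedding and unembedding carried different convex penalties), while yours isolates precisely the structural reason the aggregate one-shot majorization works: the per-block capacity $(1+\alpha)^L$ is multiplicative in exactly the same way singular values are under products.
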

\begin{proof}

\textbf{Upper bound}. For the upper bound, let $r = \min(d_{out}, d_{in})$, and write the SVD decomposition $A = U \diag(\bsigma(A)) V^{\top}$ where $U \in \R^{d_{out} \times r}$, $V \in \R^{r \times d_{in}}$ are semi-orthogonal matrices. Then, for some $\balpha \in \R^{r}_{ \geq 0}$, let 
\begin{align*}
W_u = U \begin{bmatrix} \diag(\sqrt{\frac{\bsigma(A)}{(1+\balpha)^L}}) & 0\end{bmatrix} \in \R^{d_{out} \times n}, \  
W_e = \begin{bmatrix} \diag(\sqrt{\frac{\bsigma(A)}{(1+\balpha)^L}}) \\ 0 \end{bmatrix} V^{\top} \in \R^{n \times d_{in}}, \ W_i = \begin{bmatrix} \diag(\balpha) & 0 \\ 0 & 0 \end{bmatrix} \in \R^{n \times n}\,.
\end{align*}
By construction, for any choice of $\balpha$ we have $f_{1-lin}(W_u,W_e,\{W_i\}) = A$, and therefore
\begin{align*}
c_{L,n}^{1-lin}(A;f) \leq \min_{\balpha \in \R^{r}_{\geq 0}} \sum_{i=1}^{r} 
\left(\min_{\alpha \in \R_{\geq 0}} \frac{\sigma_i(A)}{(1+\alpha)^L} + Lf(\alpha)\right) = \sum_{i=1}^{\rank(A)} \left(\frac{\sigma_i(A)}{(1+\alpha)^L} + Lf(\alpha)\right)\,.
\end{align*}
\textbf{Lower bound}. It remains to prove the lower bound, which is the ``harder'' direction of this lemma. Suppose that $W_u,W_e,\{W_i\}$ achieve the minimum cost subject to $f_{1-lin}(W_u,W_e,\{W_i\}) = A$. For convenience, define the partial products $V_{\ell} = (I+W_1)\dots(I+W_{\ell})$, where $V_0 = I$, and also let $k = \rank(A)$. By two applications of
Lemma~\ref{lem:prod-sing-values-convex-decomp} with $g(t) = \frac{1}{2}t^2$, we can lower-bound the cost of the embedding and unembedding layers by
\begin{align}\label{ineq:lower-bound-embedding}
\frac{1}{2}(\|W_u\|_F^2 + \|W_e\|_F^2) &\geq\sum_{j=1}^{k} \frac{1}{2} \left(\frac{\sigma_j(W_uV_LW_e)}{\sigma_j(W_uV_L)}\right)^2 + \frac{1}{2} \left(\frac{\sigma_j(W_uV_L)}{\sigma_j(V_L)}\right)^2\,.
\end{align}
Additionally, we can lower-bound the cost of the residual layers. Define $\tilde{f}(t) = f(\max(t-1,0))$ and note that $\tilde{f}$ is convex and non-decreasing. By (a) using the fact $\sigma_j(W_i) \geq \max(\sigma_j(I + W_i)-1,0)$, and (b) applying Lemma~\ref{lem:prod-sing-values-convex-decomp} with $g(t) = \tilde{f}(t)$, we have
\begin{align}\label{ineq:lower-bound-embedding-2}
\sum_{j=1}^{n} f(\sigma_j(W_i)) &\stackrel{(a)}{\geq} \sum_{j=1}^{n} \tilde{f}(\sigma_j(I+W_i)) \stackrel{(b)}{\geq} \sum_{j=1}^{k} \tilde{f}(\sigma_j(V_i)/\sigma_j(V_{i-1}))\,.
\end{align}
Combining \eqref{ineq:lower-bound-embedding} and \eqref{ineq:lower-bound-embedding-2}, and using that $\sigma_j(W_uV_LW_e) = \sigma_j(A)$, and that $\sigma_j(V_0) = 1$, we have
\begin{align}
c_{L,n}^{1-lin}(A;f) &\geq \sum_{j=1}^{k} \frac{1}{2} \left(\frac{\sigma_j(W_uV_LW_e)}{\sigma_j(W_uV_L)}\right)^2 + \frac{1}{2} \left(\frac{\sigma_j(W_uV_L)}{\sigma_j(V_L)}\right)^2 + \sum_{i=1}^L \tilde{f}(\sigma_j(V_i)/\sigma_j(V_{i-1})) \nonumber \\
&\geq \sum_{j=1}^k \left(\inf_{u,v_1,\ldots,v_L \in \R_{> 0}} \frac{1}{2}((\sigma_j(A)/u)^2+(u/v_L)^2) + \tilde{f}(v_1) + \sum_{i=2}^L \tilde{f}(v_i / v_{i-1})\right) \nonumber \\
&= \sum_{j=1}^k \left(\inf_{\substack{u_1,u_2,v_1,\ldots,v_L \in \R_{> 0} \\ u_1u_2v_1v_2\dots v_L = \sigma_j(A)}} \frac{1}{2}((u_1)^2 + (u_2)^2) + \sum_{i=1}^L \tilde{f}(v_i)\right) \nonumber \\
&= \sum_{j=1}^k \left(\inf_{\substack{\beta,\alpha \in \R_{> 0} \\ \beta \alpha^L = \sigma_j(A)}} \beta + L \tilde{f}(\alpha)\right)\,, \label{ineq:c1-lin-lower-bound-intermediate}
\end{align}
where in the last line we use the convexity and increasing nature of the functions $t \mapsto \frac{1}{2}t^2$ and $t \mapsto \tilde{f}(t)$. Since $\tilde{f}$ is non-increasing and $f(t) = 0$ for $t \leq 1$, the optimum in \eqref{ineq:c1-lin-lower-bound-intermediate} is achieved at some $\alpha \geq 1$, which means
\begin{align*}
\eqref{ineq:c1-lin-lower-bound-intermediate} &= \sum_{j=1}^k \left(\min_{\alpha \in \R_{> 0}} \frac{\sigma_j(A)}{(1+\alpha)^L} + L\tilde{f}(\alpha-1)\right) = \sum_{j=1}^k \left(\min_{\alpha \in \R_{> 0}} \frac{\sigma_j(A)}{(1+\alpha)^L} + Lf(\alpha)\right)\,,
\end{align*}
as claimed.

\end{proof}

The main theorem follows directly as a special case of this lemma.
\begin{proof}[Proof of Theorem~\ref{thm:main-lin-k1-k2}]

For networks with depth-1 blocks, the equation \eqref{eq:cost-lin-1} follows immediately from Lemma~\ref{lem:main-lin-convex-k1} and $f(t) = \lambda L t^2$, since $\lambda L \|W_i\|_F^2 = \sum_{j=1}^n f(\sigma_j(W_i))$.

For networks with depth-2 blocks, notice that each of the depth-2 blocks $(W_{i,1},W_{i,2})$ can be viewed as a depth-1 block computing $M_i = W_{i,2}W_{i,1} \in \R^{n \times n}$ with cost equal to the nuclear norm $\|M_i\|_*$. This is because for any $M_i$, we have $\|M_i\|_* = \min \{\frac{1}{2} (\|W_{i,2}\|_F^2 + \|W_{i,1}\|_F^2) : W_{i,2}W_{i,1} = M_i\}$. In particular, $c_{L,n,\lambda}^{2-lin}(A) = c_{L,n}^{1-lin}(A;f)$ for the function $f(t) = \lambda t$, since $\lambda \|W_i\|_* = \sum_{j=1}^n f(\sigma_j(W_i))$. This implies
\begin{align*}
c_{L,n,\lambda}^{2-lin}(A) = \sum_{i=1}^{\rank(A)} \left(\min_{\alpha \in \R_{\geq 0}} \frac{\sigma_i(A)}{(1+\alpha)^L} + L\lambda \alpha\right)\,,
\end{align*}
which has the explicit solution reported in \eqref{eq:cost-lin-2}.

\end{proof}

\section{Nonlinear residual networks and the bottleneck rank}\label{sec:nonlinear}

We now consider \textit{nonlinear} residual networks with ReLU activations $\sigma(t) = \max(0,t)$.
These architectures represent functions $f : \R^{d_{in}} \to \R^{d_{out}}$, and are parametrized by $\theta$. We again study both depth-1 and depth-2 block networks, since they are the most popular variants.

For depth-1 block residual networks, the network $f_{1-nonlin}(\cdot;\theta)$ is parametrized by embedding/unembedding weights and biases are $W_u \in \R^{d_{out} \times n}, W_e \in \R^{n \times d_{in}}, b_u \in \R^n, b_e \in \R^{d_{out}}$, and the residual weights and biases are $W_i \in \R^{n \times n}, b_i \in \R^{n}$. The network is given by
\begin{align*}
f_{1-nonlin}(x;\theta) &= (f_u \circ (\id + f_L) \circ (\id + f_{L-1}) \circ \dots \circ (\id + f_1) \circ f_e)(x)\,, \\
& \mbox{ with unembedding } f_u(z;\theta) = b_u + W_uz, \mbox{ embedding } f_e(z;\theta) = b_e + W_ez, \\
& \mbox{ and internal layers }  f_{\ell}(z;\theta) = \sigma(W_{\ell} z + b_{\ell}).
\end{align*}

Depth-2 block residual networks are the same, except that the residual weights and biases are parametrized by $W_{i,j} \in \R^{n \times n}, b_{i,j} \in \R^{n}$, and each of the internal layers is a depth-2 network:
\begin{align*}
f_{2-nonlin}(x;\theta) &= (f_u \circ (\id + f_L) \circ (\id + f_{L-1}) \circ \dots \circ (\id + f_1) \circ f_e)(x)\,, \\
& \mbox{ with unembedding } f_u(z;\theta) = b_u + W_uz, \mbox{ embedding } f_e(z;\theta) = b_e + W_ez, \\
& \mbox{ and internal layers }  f_{\ell}(z;\theta) = W_{\ell,2}\sigma(W_{\ell,1} z + b_{\ell,1}) + b_{\ell,2}.
\end{align*}

\paragraph{Cost of representing FPLFs} Given a finite piecewise linear function (FPLF) $g : \Omega \to \R^{n_1}$ on a bounded set 
$\Omega \subseteq \R^{n_0}$, we define the minimum cost of representing this function with depth-1 blocks:
\begin{align*}
c^{1-nonlin}_{L,n,\lambda}(g; \Omega) = \inf_{\theta} \{ \frac{1}{2}\|W_u\|_F^2 + \frac{1}{2} \|W_e\|_F^2 + \lambda L \sum_{i \in [L]} \|W_i\|_F^2: f_{1-nonlin}(x;\theta) = g(x) \mbox{ for all } x \in \Omega\}\,.
\end{align*}
And we define the cost with a network with depth 2-blocks:
\begin{align*}
c^{2-nonlin}_{L,n,\lambda}(g; \Omega) = \inf_{\theta} \{ \frac{1}{2}\|W_u\|_F^2 + \frac{1}{2} \|W_e\|_F^2 + \frac{\lambda}{2} \sum_{i \in [L], j \in [2]} \|W_{i,j}\|_F^2: f_{2-nonlin}(x;\theta) = g(x) \mbox{ for all } x \in \Omega\}\,.
\end{align*}
The scaling of the cost with $L$ is taken to be the same as in the case of linear networks, so as to be in the critical scaling with nontrivial behavior when we send $L \to \infty$. We do not penalize the biases for simplicity, as this does not qualitatively change the nature of the results.

\subsection{Nonlinear rank as $\lambda \to 0$}
We study the minimum-norm solutions in the limit of taking the hyperparameter $\lambda \to 0$. We will show that the bias of these networks is towards finding a solution that minimizes the nonlinear rank, similarly to what was shown in \cite{jacot2022implicit,jacot2023bottleneck} for deep fully-connected networks without residual connections. Let us recall the notions of nonlinear rank defined in that paper.
\begin{definition}[Jacobian rank; Definition~1 of \cite{jacot2022implicit}]
The Jacobian rank of an FPLF  $g$ is given by $\rank_J(g;\Omega) = \max_{x \in \Omega} \rank(Jg(x))$, where the maximum is over points $x$ in the interior of $\Omega$ where $g$ is differentiable.
\end{definition}

\begin{definition}[Bottleneck rank; Definition~2 of \cite{jacot2022implicit}]
The bottleneck rank, $\rank_{BN}(g;\Omega)$, of an FPLF $g$ is the smallest integer $k$ such that $g = h_2 \circ h_1$ for two FPLFs $h_2 : \R^k \to \R^{n_1}$ and $h_1 : \Omega \to \R^k$.
\end{definition}

The following theorem should be compared to Theorem~1 of \cite{jacot2022implicit}, and the proof closely follows the proof in that work, except with the main difference that since we have residual connections the main bottleneck in the cost is not to represent the identity matrix. Instead, the main bottleneck is representing a large scaling of the identity matrix.

\begin{theorem}[Infinite-depth cost sandwiched by nonlinear ranks as $\lambda \to 0$]\label{thm:nonlin-k1-k2-lambda-to-0}
In the infinite-depth limit, and as the parameter $\lambda \to 0$, the cost becomes lower-bounded by the Jacobian rank:
\begin{align}\label{ineq:lower-nonlinear-sandwich-lambda-0}
\liminf_{\lambda \to 0} \liminf_{L \to \infty} \frac{c_{L,n,\lambda}^{1-nonlin}(g)}{\lambda \log(1/\lambda)^2} \geq \rank_J(x;\Omega), \quad \mbox{ and } \quad
\liminf_{\lambda \to 0} \lim_{L \to \infty} \frac{c_{L,n,\lambda}^{2-nonlin}(g)}{\lambda \log(1/\lambda)} \geq \rank_J(x;\Omega)\,.
\end{align}
On the other hand, in the infinite-depth limit, and as the parameter $\lambda \to 0$, the cost becomes upper-bounded by the bottleneck rank. For any large enough $n$ depending on $g,\Omega$
\begin{align}\label{ineq:upper-nonlinear-sandwich-lambda-0}
\limsup_{\lambda \to 0} \limsup_{L \to \infty} \frac{c_{L,n,\lambda}^{1-nonlin}(g)}{\lambda \log(1/\lambda)^2} \leq \rank_{BN}(x;\Omega), \quad \mbox{ and } \quad
\limsup_{\lambda \to 0} \lim_{L \to \infty} \frac{c_{L,n,\lambda}^{2-nonlin}(g)}{\lambda \log(1/\lambda)} \leq \rank_{BN}(x;\Omega)\,.
\end{align}
\end{theorem}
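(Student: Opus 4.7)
I would prove the lower and upper bounds separately, in both cases leveraging Theorem~\ref{thm:main-lin-k1-k2}.

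For the \emph{lower bound} I reduce to the linear case via Jacobians. Pick $x^* \in \interior{\Omega}$ at which $g$ is differentiable and $\rank(Jg(x^*)) = \rank_J(g;\Omega) =: r$. Near $x^*$ each ReLU is locally affine with some fixed activation mask $D_i \in \{0,1\}^{n \times n}$, so
\[
Jg(x^*) \;=\; W_u \prod_{i=1}^L (I + D_i W_i)\, W_e,
\]
which is the input--output map of a \emph{linear} depth-$L$ residual network with ``masked'' residual weights $\tilde{W}_i := D_i W_i$. Biases do not affect Jacobians and are unpenalized, so any feasible $\theta$ for the nonlinear problem yields a feasible parameterization of the linear-cost problem for $Jg(x^*)$ with no greater cost, using $\|\tilde{W}_i\|_F \le \|W_i\|_F$. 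Hence $c_{L,n,\lambda}^{1\text{-}nonlin}(g;\Omega) \ge c_{L,n,\lambda}^{1\text{-}lin}(Jg(x^*))$, and \eqref{ineq:lower-nonlinear-sandwich-lambda-0} follows from Corollary~\ref{cor:linear-networks-lambda-asymptotics}. The depth-2 case is identical after setting $M_i := W_{i,2} D_{i,1} W_{i,1}$ and using $\|M_i\|_* \le \tfrac{1}{2}(\|W_{i,1}\|_F^2 + \|W_{i,2}\|_F^2)$.

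For the \emph{upper bound} I would use the decomposition $g = h_2 \circ h_1$ with $h_1 : \Omega \to \R^k$, $h_2 : \R^k \to \R^{d_{out}}$ FPLFs and $k = \rank_{BN}(g;\Omega)$, and build a depth-$L$ residual network in three stages. The first $\Theta(L)$ residual blocks (together with $W_e$) compute a scaled copy $s \cdot h_1(x)$ into $k$ dedicated ``latent'' coordinates, offset by a free bias so that the latent stays strictly non-negative. The middle $\Theta(L)$ residual blocks then each amplify those $k$ latent coordinates by factor $(1 + \alpha/L)$, exactly mirroring the linear construction in the proof of Lemma~\ref{lem:main-lin-convex-k1}; since the latents are non-negative, ReLU acts as identity on them and this stage contributes $k\lambda \alpha^2 (1+o(1))$ to the cost. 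The last $\Theta(L)$ residual blocks (together with $W_u$) apply $h_2$ to the amplified latent and undo the offset to produce $g(x)$. With $\alpha \approx \log(1/\lambda)$ and a suitable $s$, the embedding/unembedding costs are $O(\lambda)$, the amplification stage contributes $k\lambda\log(1/\lambda)^2(1+o(1))$, and the two outer stages contribute $O(\lambda)$ (see next paragraph), yielding \eqref{ineq:upper-nonlinear-sandwich-lambda-0}.

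The \emph{main obstacle} is implementing the initial ``$h_1$'' and final ``$h_2$'' stages at total cost only $O(\lambda)$, so that the amplification stage dominates asymptotically. Implementing a fixed nonlinear FPLF using a constant number of residual blocks would cost $\Theta(\lambda L)$ because of the $\lambda L$ pre-factor in $c^{1\text{-}nonlin}_{L,n,\lambda}$, which blows up as $L \to \infty$. Instead, each piecewise-linear piece of $h_1$ (resp.\ $h_2$) must be distributed across $\Theta(L)$ consecutive residual blocks, each of weight norm $O(1/L)$, using positive homogeneity of ReLU; the resulting sub-network can be viewed as an Euler discretization of a nonlinear transport that converges to $h_1$ (resp.\ $h_2$) as $L \to \infty$, at total residual cost $\lambda L \cdot \Theta(L) \cdot O(1/L^2) = O(\lambda)$. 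In the depth-2 case this is simpler thanks to the in-block rescaling trick already exploited in the proof of Theorem~\ref{thm:main-lin-k1-k2}: a single depth-2 block of weight norms $(c M_1, (s/c) M_2)$ implements $s \cdot h_1$ at cost $O(s)$, so Stage A/C can be done with only a constant number of blocks. Once these details are in place, the remaining analysis is a direct calculation matching the linear asymptotics of Corollary~\ref{cor:linear-networks-lambda-asymptotics}.
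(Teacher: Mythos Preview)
Your proposal is essentially the paper's proof. The lower bound via Jacobians and reduction to Theorem~\ref{thm:main-lin-k1-k2} is exactly what the paper does, and the three-stage upper-bound construction (compute $h_1$, amplify the $k$ nonnegative latents, compute $h_2$) is the same architecture the paper builds.

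One caution on the depth-$1$ upper bound: your phrasing ``Euler discretization of a nonlinear transport that \emph{converges} to $h_1$'' suggests an approximation, but the cost is defined via \emph{exact} representation $f_{1\text{-}nonlin}(x;\theta)=g(x)$ on $\Omega$, so a limiting argument does not suffice. The paper's construction is exact for every finite $L$: it first writes $h_1$ as an $L_1$-layer ReLU network with weights $V_1,\dots,V_{L_1}$, reserves disjoint coordinate blocks $S_0,\dots,S_{L_1}\subset[n]$, and then replaces the $i$th layer by $m$ identical residual blocks whose weight is $V_i/m$ supported on $S_i\times S_{i-1}$. Because the input block $S_{i-1}$ is untouched across these $m$ steps, the outputs simply add, and positive homogeneity gives $m\cdot\sigma((V_i/m)x+b_i/m)=\sigma(V_ix+b_i)$ exactly. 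This is presumably what you meant by ``using positive homogeneity of ReLU,'' but it is worth being explicit that no ODE limit is involved. (The paper also takes $m\approx L/\log(1/\lambda)$ rather than $m=\Theta(L)$, giving outer-stage cost $O(\lambda\log(1/\lambda))$ rather than $O(\lambda)$; either choice is negligible against $\lambda\log(1/\lambda)^2$.) For depth~$2$, note that $h_1$ generally needs several blocks, not one, but since the depth-$2$ cost has no $L$ prefactor a fixed number of blocks already costs $O(\lambda)$, which is the point.
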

\begin{proof}

\textbf{Lower bound}. Consider the case of depth-2 blocks. Let $\theta = (W_u,W_e,b_u,b_e,\{W_{i,j}\}, \{b_{i,j}\})$ be parameters such that $f_{2-nonlin}(x;\theta) = g(x;\theta)$ for all $x \in \Omega$. At any $x \in \Omega$ where $g$ is differentiable, the Jacobian $Jg(x)$ is given by
\begin{align*}
Jg(x) = W_u(I+W_{2,L}D_L(x)W_{1,L})\dots (I+W_{2,1}D_1(x)W_{1,1}) W_e\,,
\end{align*}
where $D_1(x),\ldots,D_L(x) \in \R^{n \times n}$ are diagonal matrices with entries in $\{0,1\}$, and correspond to the derivatives of the ReLUs at the preactivations at each layer. Since $\|D_{\ell}(x)\| \leq 1$, we have $\|W_{2,\ell}D_{\ell}(x)\|_F^2 \leq \|W_{2,\ell}\|_F^2$, so it follows that
\begin{align*}
c_{L,n,\lambda}^{2-nonlin}(g) &\geq c^{2-lin}_{L,n,\lambda}(Jg(x))\,.
\end{align*}
By an analogous argument for the case of depth-1 blocks, we also have
\begin{align*}
c_{L,n,\lambda}^{1-nonlin}(g) \geq c^{1-lin}_{L,n,\lambda}(Jg(x))\,.
\end{align*}
The asymptotics in Corollary~\ref{cor:linear-networks-lambda-asymptotics} for the cost of infinite-depth 
linear networks therefore imply \eqref{ineq:lower-nonlinear-sandwich-lambda-0}.

\textbf{Upper bound}. We prove the upper bound here for residual networks with depth-2 blocks. The proof of the upper bound for depth-1 blocks is similar, and is deferred to Appendix~\ref{app:upper-depth-1-blocks}. Let $k = \rank_{BN}(g;\Omega)$, and let $g = h_2 \circ h_1$ for two FPLFs $h_2 : \R^k \to \R^{n_1}$ and $h_1 : \Omega \to \R^k$. Without loss of generality, suppose that $h_1(\Omega) \subseteq \R_{\geq 0}^k$, since we can ensure this by adding a large enough bias term. We will explicitly construct a cheap network.

\textit{First and last residual layers}. For large enough width $n$,  that there are depths $L_1,L_2$ such that $h_1$ and $h_2$ can be represented by ReLU ResNets of depth $L_1$ and $L_2$ and inner width $n$, respectively. This follows from a slight modification of Theorem 2.1 of \cite{arora2016understanding} to ResNets. In particular, for any scalar $\alpha > 0$ in our residual network, we can set  the weights of layers $f_1,\ldots,f_{L_1}$ to compute
\begin{align*}
(f_{L_1} \circ f_{L_1-1} \circ \dots \circ f_{1})(x) = [h_1(\alpha x_1,\ldots, \alpha x_{d_in})/\alpha, 0,\ldots,0] \mbox{ for all } x \in (\Omega / \alpha) \times \{0\}^{n-k}\,,
\end{align*}
and for any scalar $\beta > 0$, we can set the weights of layers $f_{L - L_2+1},\ldots,f_{L_2}$ to compute
\begin{align*}
(f_{L} \circ f_{L - 1} \circ \dots \circ f_{L-L_2+1})(x) = [h_2(\beta x_1,\ldots,\beta x_k)/\beta,0,\ldots,0] \mbox{ for all } x \in (h_1(\Omega) / \beta) \times \{0\}^{n-k}\,.
\end{align*}
Furthermore, because of the homogeneity of the ReLU activation, the sum of Frobenius norms of the weights in these layers can be made independent of $\beta,\alpha,\lambda$. Namely, $\sum_{i \in [L_1] \cup [L-L_2+1:L],j \in [2]} \|W_{i,j}\|_F^2 \leq C$ for a constant $C$ depending only $h_1,h_2$.

\textit{Intermediate residual layers}. Next, in the intermediate $\tilde{L} := L-L_1-L_2$ layers, for any scalar $\tau \geq 1$ we can choose the weights so that
\begin{align*}
(f_{L-L_2} \circ f_{L-L_2-1} \circ \dots \circ f_{L_1+1})(x) = \tau x \mbox{ for all } x \in \R^{k}_{\geq 0} \times \{0\}^{n-k}\,.
\end{align*}
Since the map only has to be the identity on the first orthant, this can be done by setting all the biases of these layers to 0. Then, for any $L_1+1 \leq \ell \leq L-L_2$, set $W_{\ell,1} = W_{\ell,2} = (\sqrt{\tau^{1/\tilde{L}}-1})\begin{bmatrix} I_k & 0 \\ 0 & 0 \end{bmatrix}$.

\textit{Embedding and unembedding layers}. Finally, the embedding and unembedding layers can be chosen to have zero biases, and weights $W_u = \sqrt{\lambda} \begin{bmatrix} I_{d_{out}} & 0 \end{bmatrix}$ and $W_e = \sqrt{\lambda} \begin{bmatrix} I_{d_{in}} \\ 0 \end{bmatrix}$.

\textit{Computing the total cost}. If we choose $\alpha = 1/\sqrt{\lambda}$, $\tau = 1/\lambda$, and $\beta = 1/\sqrt{\lambda}$, one can verify that indeed the network computes $h_2(h_1(x)) = g(x)$ for all $x \in \Omega$. This proves that the total cost is at most, in the depth-2 block case:
$c^{2-nonlin}_{L,n,\lambda}(g) \leq \lambda (C + \tilde{L}(k(1/\lambda)^{1/\tilde{L}}-1)) + \frac{1}{2}\lambda(d_{in} + d_{out})$. So 
\begin{align*}\limsup_{\lambda \to 0} \limsup_{L \to \infty} \frac{c^{2-nonlin}_{L,n,\lambda}(g)}{\lambda \log(1/\lambda)} &\leq \limsup_{\lambda \to 0} \frac{\lambda (C+d_{in}+d_{out}) + k \lambda \log(1/\lambda)}{\lambda \log(1/\lambda)} = k = \rank_{BN}(g;\Omega)\,.
\end{align*}
\end{proof}

\bibliography{bibliography}
\bibliographystyle{alpha}

\appendix

\section{Deferred details for proof of Theorem~\ref{thm:nonlin-k1-k2-lambda-to-0}}\label{app:upper-depth-1-blocks}

We provide here the upper bound for the cost for nonlinear networks with depth-1 blocks as $\lambda \to 0$. The construction of the network is similar to the construction for depth-2 blocks provided in the main text, with some technicalities because the ReLU function $\sigma(t) = \max(0,t)$ is always nonnegative, so we cannot ``zero out'' coordinates in the residual network which makes the simulation of finite-depth networks a bit more involved. Also, because the cost scales with $L$, we require the weights at all blocks to be of order $O(1/L)$ as $L \to \infty$.

Again, let $k = \rank_{BN}(g;\Omega)$, and let $g = h_2 \circ h_1$ for FPLFs $h_2 : \R^k \to \R^{d_{out}}$ and $h_1 : \Omega \to \R_{\geq 0}^k$. 

\textit{Embedding layer and first residual layers}. From \cite{arora2016understanding}, we know that there is a width $n_1$ and depth $L_1$ such that we can write
\begin{align*}
(h_1(x), 0^{n_1-k}) = \sigma(a_{L_1} + V_{L_1} \sigma(a_{L_1-1} + V_{L_1-1}\sigma(\dots \sigma(a_1 + V_1V_e x) \dots )))\,,
\end{align*}
for weights $V_1,\ldots,V_{L_1} \in \R^{n_1 \times n_1}$, biases $a_1,\ldots,a_{L_1} \in \R^{n_1}$, and embedding $V_e \in \R^{n_1 \times d_{in}}$. For any integer $m \geq 1$, we will show how to simulate this in the first $mL_1$ layers of the ResNet. Let $S_0,\ldots,S_{L_1} \subseteq [n]$ be disjoint of size $|S_i| = n$. The embedding weights are $b_e = 0$ and $[W_e]_{S_0 \times [d_{in}]} = \sqrt{\lambda}V_e$, and 0 everywhere else. For any $i \in [L_1], j \in [m]$, let $[W_{(i-1)m+j}]_{S_i \times S_{i-1}} =V_i /m$ and 0 everywhere else. Let $[b_{(i-1)m+j}]_{S_i} = a_i \sqrt{\lambda}$, and 0 everywhere else. Define
$$\tilde{h}_1 := f_{mL_1} \circ f_{mL_1 - 1} \circ \dots \circ f_1 \circ f_e$$ and let $T \subseteq S_L$ be the first $k$ coordinates of $S_L$. This has the property that, for all $x \in \Omega$, we have $$[\tilde{h}_1(x)]_{T} = \sqrt{\lambda} h_1(x).$$
And for all $i \not\in S_0 \cup \dots \cup S_L$,
$$[\tilde{h}_1(x)]_{i} = 0.$$
Furthermore the total cost of these first $mL_1$ layers and embedding layer is at most
\begin{align*}
\frac{1}{2} \|W_e\|_F^2 + \lambda L 
\sum_{i=1}^{mL_1} \|W_i\|_F^2 \leq \frac{\lambda}{2} \|V_e\|_F^2 + \lambda L \sum_{i=1}^{mL_1} \|V_{\ceil{i/m}}\|_F^2 / m^2 \leq \lambda C_1(1 + L / m)\,,
\end{align*}
for a constant $C_1$ depending on $h_1$.

\textit{Intermediate residual layers}. Next, in the intermediate $L_{int}$ layers, for any scalar $\tau \geq 1$ we can choose the weights so that
\begin{align*}
\tilde{h}_{int} := f_{L_1+L_{int}} \circ f_{L_1+L_{int}-1} \circ \dots \circ f_{L_1+1}
\end{align*}
satisfies
\begin{align*}
[\tilde{h}_{int}(x)]_i = \begin{cases} \tau [x]_i, & \mbox{ if } i \in T \\
[x]_i, & \mbox{ if } i \not\in T \end{cases} \mbox{ for all } x \in \R^n\,,
\end{align*}
Since the map only has to be the identity on $k$ nonnegative coordinates, this can be done by setting all the biases of these layers to 0, and $[W_{\ell}]_{T \times T} = (\tau^{1/\tilde{L}}-1) I_k$, and 0 everywhere else.

\textit{Last residual layers and unembedding layer}. Finally, by a similar argument to the embedding and first layers, there is $L_2$ and $n_2$ such that we can simulate $h_2$ in the final $mL_2$ and unembedding layers, i.e.
\begin{align*}
\tilde{h}_2 := f_u \circ f_{L} \circ f_{L-1} \circ \dots \circ f_{L-mL_2+1}
\end{align*}
satisfies, for all $x$ such that $x_{[T]} \subseteq h_1(\Omega)/\sqrt{\lambda}$ and such that $x_i = 0$ if $i \not\in S_0 \cup \dots \cup S_L$,
\begin{align*}\tilde{h}_2(x) = h_2(\sqrt{\lambda}x_{[T]})\end{align*}
and we have cost
\begin{align*}
\frac{1}{2} \|W_u\|_F^2 + \lambda L \sum_{i=L-mL_2+1}^{L} \|W_i\|_F^2 \leq \lambda C_2 (1 + L/m)\,.
\end{align*}

\textit{Computing the total cost}. Letting $L_{int} = L - mL_1 - mL_2$ and $\tau = 1/\lambda$, we see that $f_{1-lin}(x;\theta) = \tilde{h}_2(\tilde{h}_{int}(\tilde{h}_1(x))) = h_2(h_1(x)) = g(x)$ for all $x \in \Omega$. So the total cost in the depth-1 case for large enough $n$ is at most
\begin{align*}c^{1-nonlin}_{L,n,\lambda}(g) \leq \min_{m \mbox{ such that } mL_1 + mL_2 < L} \lambda(C_1+C_2)(1+L/m) + \lambda L(L_{int}k((1/\lambda)^{1/L_{int}}-1)^2).\end{align*} 
If we choose $m = \floor{L / 
 \log(1/\lambda)}$ for small enough $\lambda$, this shows 
\begin{align*}\limsup_{\lambda \to 0} \limsup_{L \to \infty} \frac{c^{1-nonlin}_{L,n,\lambda}(g)}{\lambda \log(1/\lambda)^2} &\leq \limsup_{\lambda \to 0} \frac{\lambda (C_1+C_2)(2+\log(1/\lambda)) + k \lambda \log(1/\lambda)^2/(1-(L_1+L_2)/\log(1/\lambda))}{\lambda \log(1/\lambda)^2} \\
&= k = \rank_{BN}(g;\Omega)\,.
\end{align*}

\end{document}